\newtheorem{thm}{Theorem}
\title{Angular Embedding: A New Angular Robust Principal Component Analysis}
\author{
    Shenglan Liu\textsuperscript{\rm 1}
    \thanks{Corressponding author.},
    Yang Yu\textsuperscript{\rm 1}
}
\begin{document}
\maketitle

\begin{abstract}
As a widely used method in machine learning, principal component analysis (PCA) shows excellent properties for dimensionality reduction. It is a serious problem that PCA is sensitive to outliers, which has been improved by numerous Robust PCA (RPCA) versions. However, the existing state-of-the-art RPCA approaches cannot easily remove or tolerate outliers by a non-iterative manner. To tackle this issue, this paper proposes Angular Embedding (AE) to formulate a straightforward RPCA approach based on angular density, which is improved for large scale or high-dimensional data. Furthermore, a trimmed AE (TAE) is introduced to deal with data with large scale outliers. Extensive experiments on both synthetic and real-world datasets with vector-level or pixel-level outliers demonstrate that the proposed AE/TAE outperforms the state-of-the-art RPCA based methods.

\end{abstract}

\section{Introduction}

As machine leaning is widely used in many applications, principal component analysis (PCA) \cite{wold1987principal} has already become a remarkable method for dimensionality reduction \cite{vasan2016dimensionality, adiwijaya2018dimensionality}, computer vision \cite{bouwmans2018applications}, etc. However, one of the most important issues of PCA is that the principal components (PCs) are sensitive to outliers \cite{zhao2014robust}, which could not be well addressed under $\ell_2$-norm. RPCA based methods \cite{candes2011robust, zhao2014robust} enhance the robustness of PCA through low-rank decomposition. Besides, iterative subspace learning \cite{roweis1998algorithms, hauberg2014grassmann} is another approach to realize robust PCs for data mining tasks (e.g., data classification \cite{xia2013semi}, clustering \cite{ding2006r} and information retrieval \cite{wang2015semantic}). However, most existing robust methods for PCA are iterative optimization and always free of PCs. The outliers cannot be easily removed or tolerated by non-iterative approaches, which limits the applications on real-world problems (e.g., image analysis in medical science \cite{lazcano2017porting} or FPGA \cite{fernandez2019fpga}). 

To tackle the problems above, in this paper, a straightforward RPCA approach named Angular Embedding (AE), which optimizes PCs with angular density, is proposed with quadratic optimization of cosine value on hypersphere manifold. Based on the cosine measurement, AE enhances the robustness of $\ell_2$-norm based PCA and outperforms the existing methods. As an eigen decomposition based PCA approach, AE is improved to reduce the computational complexity by weighing the dimensionality and the sample size. 

Theoretically, we prove the superiority of quadratic cosine optimization in AE with analyzing the effects of on the determination of PCs and outlier suppression. And in practice, the experimental results of AE on both synthetic and real data show its effectiveness. Furthermore, to address data with large scale outliers, we propose a pre-trimming theory based on cosine measurement and propose trimmed AE (TAE). Base on TAE theory, the experiments on background modeling and shadow removal tasks show its superiority.



\section{Related Work}

In recent years, robust extensions for PCA have attracted increasing attention due to its wide applications. Most existing methods could fall into three categories as follows.

\subsubsection{RPCA with low-rank representation.} Robust PCA \cite{candes2011robust} provides a new robust approach to recover both the low-rank and the sparse components by decomposing the data matrix as ${\bf X} = {\bf L} + {\bf S}$ with Principal Component Pursuit. Based on RPCA, many improved versions have been proposed including the tensor version \cite{lu2019tensor} (require more memory for SVD results) to reduce the error of low-dimensional representation caused by outliers. Methods such as \cite{candes2011robust} require iterative calculation SVD of data with cubic time complexity. \cite{xue2018informed, yi2016fast} utilize gradient descent to avoid frequent maxtrix decompositions. Although the above dimensionality reduction method is robust, it is difficult to fit on large datasets and to design a robust mathematical model for capturing the linear transformation matrix between latent and observed variables. All these reasons limit the applications of the RPCA methods above.

\subsubsection{RPCA for subspace learning.} Subspace learning aims to obtain low-dimensional subspace which spans with robust projection matrix. $\ell_1$-norm based subspace estimation \cite{ke2005robust} is a robust version for standard PCA using alternative convex programming. The determination of robust subspaces with $\ell_1$-norm based methods, however, is computationally expensive. Besides, questionable results in some tasks like clustering would be produced since the global solutions are not rotationally invariant. To this end, $R_1$-PCA \cite{ding2006r} is proposed to improve $\ell_1$-norm PCA. Unfortunately, $R_1$-PCA is also time-consuming, similar to another work named Deterministic High-dimensional Robust PCA \cite{feng2012robust} which updates the weighted covariance matrix with frequent matrix decomposition. Furthermore, based on statistical features such as covariance, Roweis proposed EM PCA \cite{roweis1998algorithms} using EM algorithm to obtain PCs for Gaussian data.

\subsubsection{RPCA with angle measurement.} Recently, angle-based methods \cite{graf2003classification, liu2014scatter, wang2018cosface, wang2017angle} become new approaches in machine learning. The angle metric between samples is less sensitive to outliers and has been applied in many domains (e.g., RNA structures analysis \cite{sargsyan2012geopca}, texture mapping in computer vision \cite{wilson2014spherical}). The non-iterative linear dimensionality reduction method \cite{liu2014scatter}, which utilizes cosine value to obtain robust projection matrix, motivates the angle-based RPCA methods. Angle PCA \cite{wang2017angle} employed cotangent value and iterative matrix decomposition to realize the idea of PCA, which is robust but computationally expensive. Based on EM PCA \cite{roweis1998algorithms} and (trimmed) averages, a more scalable approach in introduced by Grassmann Average (GA) \cite{hauberg2014grassmann}, which is also related to the angle measurement. For GA, the drawback is lower parallelization for calculating large PCs because of iterative matrix multiplications and orthogonalization for PCs. Actually, most RPCA methods is based on the iterative solutions, which are limited by the time-consuming steps (e.g., matrix decomposition in PCA) and unused computational resources for parallelization on CPUs and GPUs, etc.

\section{Angular Robust Principal Component Analysis on Hypersphere Manifold}

$\ell_2$-norm based PCA are sensitive to outliers and can be misled by the Euclidean distance. Many researches on robust PCA pay attention to characterize the error with $\ell_1$-norm or other approaches. The proposed AE, which follows $\ell_2$-norm, turns to determine the principal components (PCs) based on angular density instead of calculating distance. In practice, angular density will not be misled by outliers with large Euclidean distance, especially for the directional outliers (far from the direction of PCs).

\subsection{Angular Density Framework on Hypersphere}

Given a set of zero-mean data ${\bf X} = \{{\bf x}_1, {\bf x}_2, \cdots, {\bf x}_n\} \subset \mathbbm{R}^D$ under the assumption of Gaussian distribution, the angular density can be defined as the number of samples within the unit angle. In theory, AE tends to determined PCs based on angular density. 

The straightforward measurement of angular density is much more complicated than original PCA. Considering the geodesic distance $\mathcal{D}_{ij}$ between any two normalized samples ${\bf u}_i$ and ${\bf u}_j$ on the unit (radius $\mathcal{R}=1$) hypersphere manifold, 

\begin{equation}
    \label{surface-density}
        \mathcal{D}_{ij} = \mathcal{R} \beta_{ij} = \beta_{ij},
\end{equation}

\noindent where $\beta_{ij} = \langle {\bf u}_i, {\bf u}_j \rangle$ is in radians. That is, the angular density between two samples can be quantified by measuring the surface density on a unit hypersphere manifold of codimension one in $D$ dimensions. The input samples can thus be firstly normalized by mapping the original $D$-dimensional zero-mean samples into a unit hypersphere manifold. For $i = 1, 2, \cdots, n$, the unit vector ${\bf u}_i$ corresponding to each sample ${\bf x}_i$ can be obtained by computing 

\begin{equation}
    \label{normalize}
        {\bf u}_i = \frac{{\bf x}_i}{\| {\bf x}_i \|_2}.
\end{equation}

\noindent The normalized none zero mean inputs on $(D-1)$-sphere can be marked as ${\bf U} = \{{\bf u}_1, {\bf u}_2, \cdots, {\bf u}_n\}$. 

\subsubsection{The leading PC.} Given the definition that ${\bf q}$ is the leading PC, which corresponds to the position of zero angle in $D$ dimensions, then each sample ${\bf u}_i$ can be represented as a directional angle $\theta_i \in (-\pi, \pi]$, in radians. To further simplify the calculation, the optimization of angular density are determined by utilizing the sine value of directional variable $\theta_i$ instead of the geodesic distance $\mathcal{D}_{ij}$ on the hypersphere. Then the leading PC can be determined by formulating

\begin{equation}
\label{optimization-sin-leading-PC}
\begin{split}
    {\bf q} 
    &= \arg \min_{\bf q} \sum_{i=1}^{n} \sin ^2 \theta_i \\
    &= \arg \max_{\bf q} \sum_{i=1}^{n} \cos ^2 \theta_i \\
    &= \arg \max_{\bf q} {\bf q}^T {\bf U} {\bf U}^{T} {\bf q} 
\end{split}.
\end{equation}

\subsubsection{Multiple PCs.} Let ${\bf Q} = \{{\bf q}_1, \cdots, {\bf q}_d\}\subset \mathbbm{R}^D$ be the top-$d$ orthogonal PCs. Then the orthogonal projection $\hat {\bf u}_i = {\bf Q}{\bf Q}^T {\bf u}_i$ of the sample ${\bf u}_i$ in $\mathbbm{R}^{D}$ can be represented as $\hat {\bf u}_i = \sum_{j=1}^{d} ({\bf u}_i^T {\bf q}_j) {\bf q}_j / ({\bf q}_j^T {\bf q}_j) $. We define $\theta_i$ to be the angle between the sample ${\bf u}_i$ and its projection $\hat {\bf u}_i$. Each ${\bf q}_j ,j \in \{1,\cdots, d\}$ contributes to $\theta_i$ under $\cos^2 \theta_i = \sum_{j=1}^d \cos^2 \vartheta_{ij}$, where $\vartheta_{i_j} = \langle {\bf u}_i, {\bf q}_j \rangle$ indicates the angle between ${\bf u}_i$ and ${\bf q}_j$ in $D$ dimensions.
Then PCs can be determined by formulating angular density based PCA as 

\begin{equation}
\label{optimization-sin}
\begin{split}
    {\bf Q} &= \arg \max_{\bf Q} \sum_{i=1}^{n} \sum_{j=1}^{d} \cos^2 \vartheta_{ji}\\
    &= \arg \max_{\bf Q} \text{trace} \left( {\bf Q}^T {\bf U} {\bf U}^T {\bf Q} \right)
\end{split}.
\end{equation}

\noindent The minimum reconstruction between sample ${\bf x}_i$ on a $D-1$-sphere and its projection is equivalent to maximize the squared cosine value. The combined AE algorithm can be found in Algorithm. \ref{algorithm-AE}.

\begin{algorithm}
    \caption{\textit{AE Algorithm}}
    \label{algorithm-AE}
    \begin{algorithmic} 
        \FOR{each sample ${\bf x}_i,i=1,\dots,n$}
        \STATE {${\bf u}_i \leftarrow {\bf x}_i / \|{\bf x}_i\|_2$}
        \ENDFOR
        \STATE{${\bf Q} \leftarrow \arg \max\limits_{\bf Q} \text{trace} \left( {\bf Q}^T {\bf U} {\bf U}^T {\bf Q} \right)$}
    \end{algorithmic}
\end{algorithm}

\subsection{Why the Squared Cosine Measurement?}

The measurement of angular density is transformed into cosine value for normalized data on hypersphere in $D$ dimensions. In this section, we demonstrate the robustness of our proposed AE in theory by analyzing the effects of squared cosine value on both the determination of PCs and outlier suppression. 

\subsubsection{$\ell_2$-norm based cosine attention mechanism.} Let the  $d$-dimensional vector $\Theta_i = [\cos \vartheta_{i1}, \cdots, \cos \vartheta_{id}]^T$ be the vector representation of $\theta_i$. Then $\cos \theta_i$ follows the $\ell_2$-norm of $\Theta_i$. In practice, Euclidean distance based PCA is less robust with $\ell_2$-norm, which would be turned into strength in AE. Considering the angle $\theta_i$ between $0$ and $\pi / 2$ in radians, the value of $\cos^2 \theta_i$ tends to reduce more sharply compared with $|\cos \theta_i|$ as $\theta_i$ increases, since they follow $|\cos \theta_i| / \cos^2 \theta_i = 1 / |\cos \theta_i|$. We reformulate Eq. \ref{optimization-sin} as $\psi({\bf u}_i, {\bf Q}) = \sum_i \lambda_i \cos \theta_i$, where $\lambda_i = \cos \theta_i$ is the attention factor. More specifically, the $\ell_2$-norm for $\Theta_i$ achieves the attention-like mechanism, which would reduce the contributions of those samples far from the directions of PCs (closer to $\pi / 2$). Thus, PCs in AE always pays more attention to select the directions corresponding to higher angular density since the samples would produce higher squared cosine values. 

\subsubsection{Outlier suppression via squared consine value.} The cosine function is widely utilized to measure the similarity of vectors \cite{nguyen2010cosine} in high dimensions, where the lower cosine value corresponds to higher similarity. Considering a single PC ${\bf q}$, $\theta_i = \langle {\bf q}, {\bf u}_i \rangle$ will be the angle between a sample ${\bf u}_i$ and the leading component ${\bf q}$. Then the samples corresponding to small $\cos \theta_i$ values would be possible outliers, since the similarity between ${\bf q}$ and ${\bf u}_i$ is lower. For PCA, the computed principal components are more likely to be misled by the samples far from the intrinsic principal components. For AE, by contrast, the influence of outliers can be suppressed by normal samples illustrated as Fig. \ref{outliers-suppression}. 

\begin{figure}[!t]
    \centering
    \includegraphics[width=8cm]{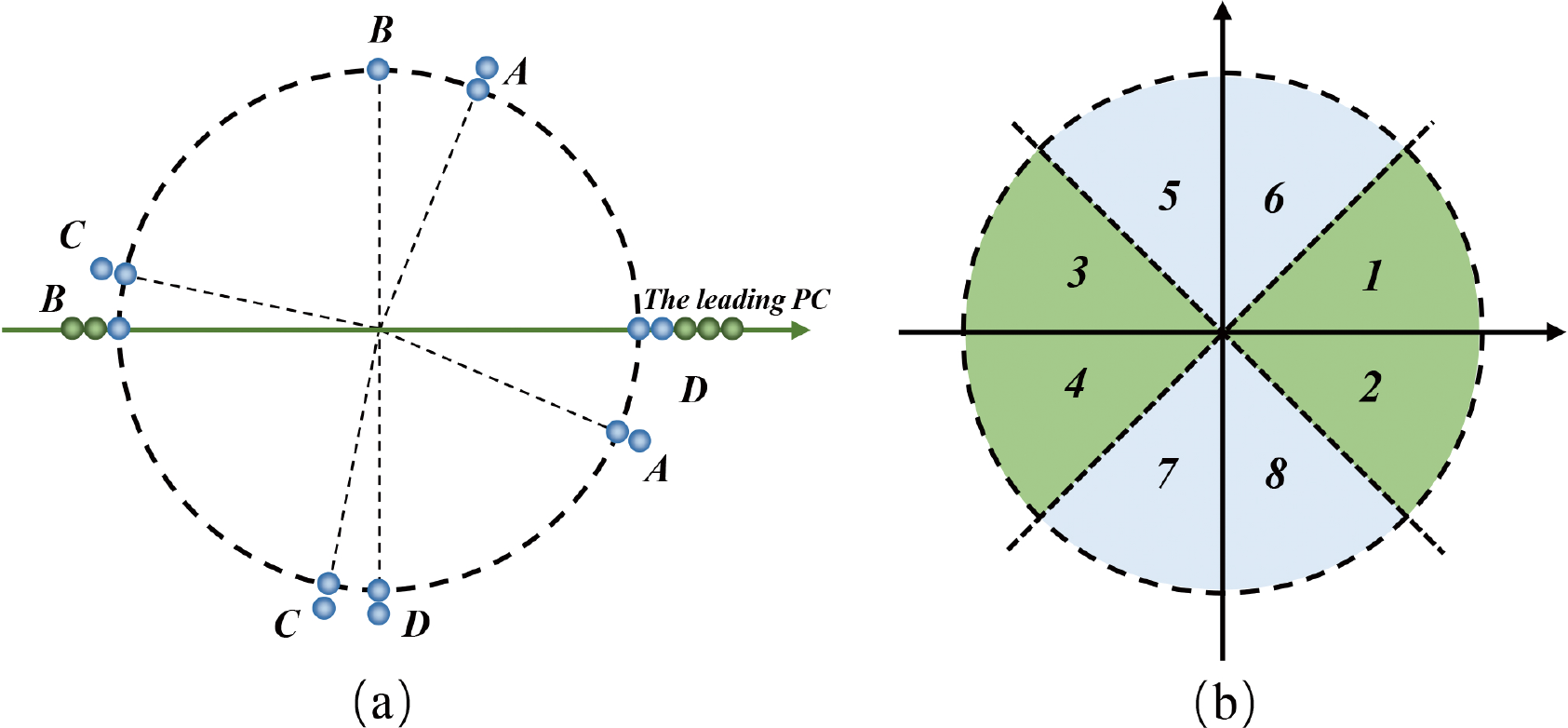}
    \caption{Outlier suppression mechanism under squared cosine optimization in 2 dimensions. All the samples distributed on the circumference, where the radial samples in (a) represent multiple samples at the same localizations. The symbols \textit{A}, \textit{B}, \textit{C} and \textit{D} indicate four orthogonal groups, where the pairwise blue samples in the same class do not contribute to the determination of PCs for AE. Suppose that the green areas (1, 2, 3, and 4) in (b) correspond to higher sample density, and then outliers in blue areas (5, 6, 7, and 8) would be suppressed by normal samples in green area (4, 2, 3, and 1) based on Theorem. \ref{theorem-outlier-suppress}, respectively.}
    \label{outliers-suppression}
\end{figure}

\begin{thm}
    \label{theorem-outlier-suppress}
    The PC ${\bf q}$ with Eq. \ref{optimization-sin-leading-PC} is less influenced by the pairwise vectors ${\bf u}_i$ and ${\bf u}_j$, which are nearly orthogonal, when ${\bf q}$ distributes on the hyperplane $\mathcal{H}$ spanned by ${\bf u}_i$, ${\bf u}_j$ and the origin $O$.
\end{thm}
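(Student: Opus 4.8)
The plan is to restrict the objective in Eq.~\ref{optimization-sin-leading-PC} to the plane $\mathcal{H}$ and to show that the pair $\{{\bf u}_i,{\bf u}_j\}$ contributes an (almost) constant term there, hence cannot (much) move the maximizer. First I would fix an orthonormal basis $\{{\bf e}_1,{\bf e}_2\}$ of $\mathcal{H}$ with ${\bf e}_1={\bf u}_i$, write the candidate PC as ${\bf q}=\cos\phi\,{\bf e}_1+\sin\phi\,{\bf e}_2$, and let $\alpha=\langle{\bf u}_i,{\bf u}_j\rangle$, so that ${\bf u}_j=\cos\alpha\,{\bf e}_1+\sin\alpha\,{\bf e}_2$ up to orientation. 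Then $\cos^2\theta_i=({\bf q}^T{\bf u}_i)^2=\cos^2\phi$ and $\cos^2\theta_j=({\bf q}^T{\bf u}_j)^2=\cos^2(\phi-\alpha)$, so the joint contribution of the pair to $\sum_k\cos^2\theta_k$ is $g(\phi)=\cos^2\phi+\cos^2(\phi-\alpha)$.

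Next I would treat the exactly orthogonal case $\alpha=\pi/2$: there $g(\phi)=\cos^2\phi+\sin^2\phi=1$ for every $\phi$, equivalently ${\bf q}^T({\bf u}_i{\bf u}_i^T+{\bf u}_j{\bf u}_j^T){\bf q}={\bf q}^T{\bf q}=1$ since ${\bf u}_i{\bf u}_i^T+{\bf u}_j{\bf u}_j^T$ acts as the identity on $\mathcal{H}$. Hence the pair adds a constant to the quadratic form ${\bf q}^T{\bf U}{\bf U}^T{\bf q}$ on $\{{\bf q}\in\mathcal{H}:\|{\bf q}\|_2=1\}$, its derivative in $\phi$ vanishes identically, and the maximizer of the restricted objective over $\mathcal{H}$ is fixed entirely by the remaining $n-2$ samples; so the leading PC is not influenced by the orthogonal pair at all.

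For the nearly orthogonal case I would set $\alpha=\pi/2+\varepsilon$ with $|\varepsilon|$ small and Taylor-expand $g(\phi)=\cos^2\phi+\sin^2(\phi-\varepsilon)=1-\varepsilon\sin 2\phi+O(\varepsilon^2)$, so $|g'(\phi)|\le 2|\varepsilon|+O(\varepsilon^2)$ uniformly in $\phi$; equivalently, in the basis $\{{\bf e}_1,{\bf e}_2\}$ the matrix $M:={\bf u}_i{\bf u}_i^T+{\bf u}_j{\bf u}_j^T$ has $\text{trace}\,M=2$ and $\det M=\sin^2\alpha$, hence eigenvalues $1\pm|\cos\alpha|$, i.e. $M=I_2+E$ with $\|E\|=|\cos\alpha|$. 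Thus the pair perturbs the Rayleigh quotient restricted to $\mathcal{H}$ by at most $|\cos\alpha|$, and — assuming the part of ${\bf U}{\bf U}^T$ coming from the other samples has an eigenvalue gap on $\mathcal{H}$ — the optimal ${\bf q}\in\mathcal{H}$ moves only by $O(|\cos\alpha|)$; letting $\alpha\to\pi/2$ recovers the exact statement, which is the precise sense in which ${\bf q}$ is ``less influenced'' by a nearly orthogonal pair that shares a plane through the origin with ${\bf q}$.

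The step I expect to be the main obstacle is making the informal phrase ``less influenced'' quantitative: one has to commit to measuring influence by the variation of the pair's contribution as ${\bf q}$ rotates inside $\mathcal{H}$, and then control how a perturbation of size $|\cos\alpha|$ in the quadratic form propagates to the location of the maximizer, which needs the eigenvalue-gap assumption on the contribution of the other samples. The trigonometric identities and the $2\times 2$ eigenvalue computation are routine once the two-dimensional parametrization of $\mathcal{H}$ is set up.
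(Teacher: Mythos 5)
Your proposal is correct and reaches the same quantitative conclusion as the paper: the pair contributes an exactly constant term to the objective when $\langle{\bf u}_i,{\bf u}_j\rangle=\pi/2$, and a term whose variation over ${\bf q}\in\mathcal{H}$ is bounded by $|\sin\xi|$ (your $|\cos\alpha|$, with $\alpha=\pi/2+\varepsilon$ and $\xi=|\varepsilon|$) when the pair is only nearly orthogonal. The execution differs in bookkeeping: the paper introduces a surrogate vector $\tilde{\bf u}_i$ exactly orthogonal to ${\bf u}_j$ and uses the product-to-sum identity $\cos^2 a-\cos^2 b=-\sin(a+b)\sin(a-b)$ to get the bias $\pm\sin(\langle{\bf q},{\bf u}_i\rangle+\langle{\bf q},\tilde{\bf u}_i\rangle)\sin\xi$, whereas you parametrize $\mathcal{H}$ by an angle $\phi$ and, more tellingly, observe that $M={\bf u}_i{\bf u}_i^T+{\bf u}_j{\bf u}_j^T$ restricted to $\mathcal{H}$ equals $I_2+E$ with $\|E\|=|\cos\alpha|$ --- in fact your exact computation gives $g(\phi)=1-\sin(2\phi-\varepsilon)\sin\varepsilon$, which reproduces the paper's bound without any Taylor remainder. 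Your matrix viewpoint is cleaner and makes the perturbation structure explicit. You also correctly flag a point the paper glosses over: a uniform $O(|\cos\alpha|)$ perturbation of the restricted quadratic form only controls the displacement of the maximizer if the contribution of the remaining samples has an eigenvalue gap on $\mathcal{H}$; the paper stops at bounding the objective bias and asserts ``few contributions to PCs,'' so your gap assumption is an honest strengthening needed to make ``less influenced'' refer to the location of ${\bf q}$ rather than to the objective value.
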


\begin{proof}
    We mark $\langle {\bf v}, {\bf w} \rangle$ as the angle between vectors ${\bf v}$ and ${\bf w}$ in $D$ dimensions. When the PC ${\bf q}$ distributes on the hyperplane $\mathcal{H}$ spanned by ${\bf u}_i$, ${\bf u}_j$ and the origin $O$ in $D$ dimensions, we consider the following two conditions. If $\langle {\bf u}_i, {\bf u}_j \rangle = \pi / 2$ is satisfied strictly, Eq. \ref{optimization-sin} follows $\varphi({\bf q}, {\bf u}_i, {\bf u}_j) = \sum_{r \in \{i, j\}} \cos ^2 \langle {\bf q}, {\bf u}_r \rangle = \text{const}$. Then, ${\bf u}_i$ and ${\bf u}_j$ contribute nothing to the determination of PC on the hyperplane $\mathcal{H}$. When there is a slight deviation $\xi$ for the angle $\langle {\bf u}_i, {\bf u}_j \rangle$ from $\pi / 2$, that is, $\langle {\bf u}_i, {\bf u}_j \rangle = \pi / 2 \pm \xi$, we suppose $\tilde {\bf u}_i$ as the substitution of ${\bf u}_i$ that satisfies $\langle \tilde {\bf u}_i, {\bf u}_j \rangle = \pi / 2$ and $\langle \tilde {\bf u}_i, {\bf u}_i \rangle = \xi$. And $\varphi({\bf q}, {\bf u}_i, {\bf u}_j) = \sum_{r \in \{i, j\}} \cos ^2 \langle {\bf q}, {\bf u}_r \rangle = \cos ^2 \langle {\bf q}, \tilde {\bf u}_i \rangle + \cos ^2 \langle {\bf q}, {\bf u}_j \rangle + \cos ^2 \langle {\bf q}, {\bf u}_i \rangle - \cos ^2 \langle {\bf q}, \tilde {\bf u}_i \rangle = \text{const} + \cos ^2 \langle {\bf q}, {\bf u}_i \rangle - \cos ^2 \langle {\bf q}, \tilde {\bf u}_i \rangle = \text{const} \pm \sin (\langle {\bf q}, {\bf u}_i \rangle + \langle {\bf q}, \tilde {\bf u}_i \rangle) \sin \xi$ can be obtained with Eq. \ref{optimization-sin}. Thus, the bias for the objective is $|\sin (\langle {\bf q}, {\bf u}_i \rangle + \langle {\bf q}, \tilde {\bf u}_i \rangle) \sin \xi| \leq |\sin \xi|$. In summary, ${\bf u}_i$ and ${\bf u}_j$ make few contributions to PCs on the hyperplane $\mathcal{H}$ under the error $|\sin \xi|$.
\end{proof}

Normal samples that distribute on the area of higher density on the hypersphere are always dominant for the input. Four groups \textit{A}, \textit{B}, \textit{C} and \textit{D} in Fig. \ref{outliers-suppression} (a) with orthogonal unit vectors lie on the circumference. Based on Theorem. \ref{theorem-outlier-suppress}, the orthogonal samples (marked as blue) within the same group make no contributions to the leading PC in Eq. \ref{optimization-sin-leading-PC} so that the remaining samples (marked as green) determine the leading PC. As shown in Fig. \ref{outliers-suppression} (b), the green areas (Area 1, 2, 3, and 4) correspond to higher sample density (normal samples) while the blue area (Area 5, 6, 7, and 8) are of lower density (outliers). If normal samples are covered densely enough, the leading PC can be restricted to the normal area (Area 1, 2, 3, and 5) according to Theorem. \ref{theorem-outlier-suppress}.

\subsection{Trimmed Angular Embedding (TAE)}

When the proportion of normal samples is challenged by large scale outliers in the dataset, the PCs could not be the optimal direction in AE. To remove the misdirection of outliers, the original samples should be trimmed in advance with TAE. The pre-trimming theory of TAE is based on the following items:

\begin{enumerate}
    \item The number of normal samples dominates the total data distribution.
    \item The PCs depend on the directions of large angular density.
    \item Outliers always distribute far from PCs.
\end{enumerate}

\begin{figure}[!t]
    \centering
    \includegraphics[width=8cm]{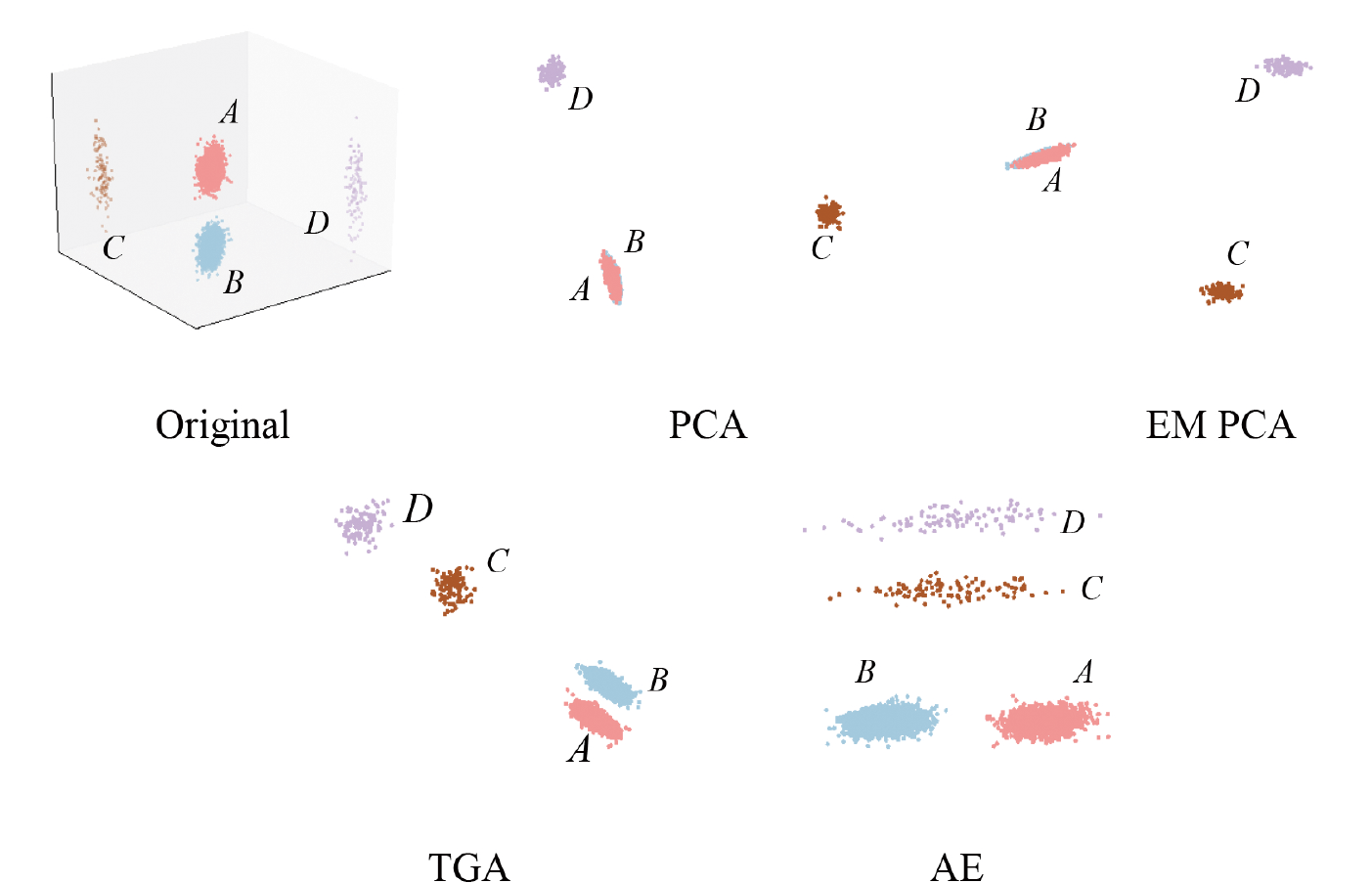}
    \caption{The results of dimensionality reduction on synthetic data. A, B, C and D represent samples of 4 classes.}
    \label{3d-data-dr}
\end{figure}

Similar to AE with the cosine optimization, the trimming principle is under cosine measurement, similarly. For samples ${\bf u}_i$ and ${\bf u}_j$ distributed on the hypersphere in $D$ dimensions, the cosine value can be easily quantified as $\cos \langle {\bf u}_i, {\bf u}_j \rangle = {\bf u}_i^T {\bf u}_j$.
TAE firstly calculates the pairwise cosine values in the dataset and perform the absolute value to map them into $[0, 1]$. According to Item 3, if we select a vector ${\bf u}_i$ randomly and suppose that it is the leading component, then the samples which correspond to a lower cosine value can be assumed as outliers. Following this route, for each sample ${\bf u}_i$ in the dataset ${\bf U}$, we compute the number of outliers ($\tau_i$) under an angular threshold $\eta_\theta$ by assuming ${\bf u}_i$ as the leading component. Thus, based on Item 1 and Item 2, those samples that contribute to the minimum $\tau_{\min} = \min \{\tau_i\}_{i=1}^n$ can be finally determined as outliers and be trimmed before the PCs are computed. We define the matrix ${\bf S}$ to select the remaining normal samples $\widetilde{\bf U} \in \mathbbm{R}^{D \times (n - \tau_{\min})}$, then the trimming can be formulated as $\widetilde{\bf U} = {\bf U} {\bf S}$. The detailed algorithm of TAE can be shown in Algorithm. \ref{algorithm-TAE}. 

\begin{algorithm}
    \caption{\textit{TAE Algorithm}}
    \label{algorithm-TAE}
    \begin{algorithmic} 
        \FOR{each sample ${\bf x}_i \in {\bf X}$}
        \STATE {${\bf u}_i \leftarrow {\bf x}_i / \|{\bf x}_i\|_2$}
        \ENDFOR
        \FOR{$i=1,\dots,n$}
        \STATE{$\tau_i = \text{Count}({\bf u}_i^T {\bf U} < \cos \eta_\theta)$}
        \ENDFOR
        \STATE{$\tau_{\min} \leftarrow \min \{\tau_i\}_{i=1}^n$}
        \STATE{Determine the selection matrix ${\bf S}$ based on $\tau_{\min}$.}
        \STATE{$\widetilde{\bf U} \leftarrow {\bf U} {\bf S}$ to determine the normal samples.}
        \STATE{${\bf Q} \leftarrow \arg \max\limits_{\bf Q} \text{trace} \left( {\bf Q}^T \widetilde {\bf U} \widetilde {\bf U}^T {\bf Q} \right)$}
    \end{algorithmic}
\end{algorithm}

The $n - \tau_{\min}$ normal samples $\widetilde{\bf U}$ is obtained after TAE trims the outliers from the original zero-mean dataset. Note that for a memory-allowed input, the pairwise cosine value matrix ${\bf C} = {\bf U}^T {\bf U}$ could be utilized to determine the outliers and be trimmed before decomposition. More specifically, for the row $i$ with $\tau_{\min}$, those column indexes $j$ with ${\bf C}_{ij} < \cos \eta_\theta$ correspond to outliers.

\subsection{Computational Acceleration and Complexity Analysis}

The computation of AE could be optimized when the dimensionality $D$ or sample size $n$ is large, which can be attributed to the cubed computational complexity for matrix decomposition. When only one of them is large, the decomposition can be computed effectively by weighing $D$ and $n$. Specifically, the matrix ${\bf U} {\bf U}^T$ with size $D \times D$ can be decomposed directly when $D < n$. When $n < D$, the decomposition can be conducted on the matrix ${\bf U}^T {\bf U}$ with size $n \times n$. When the dimensionality of low-dimensional subspace is low, the randomized algorithms \cite{martinsson2011randomized, szlam2014implementation} for the decomposition of matrix ${\bf U}$ could be used to raise the efficiency.

\begin{figure*}[!t]
    \centering
    \includegraphics[width=16cm]{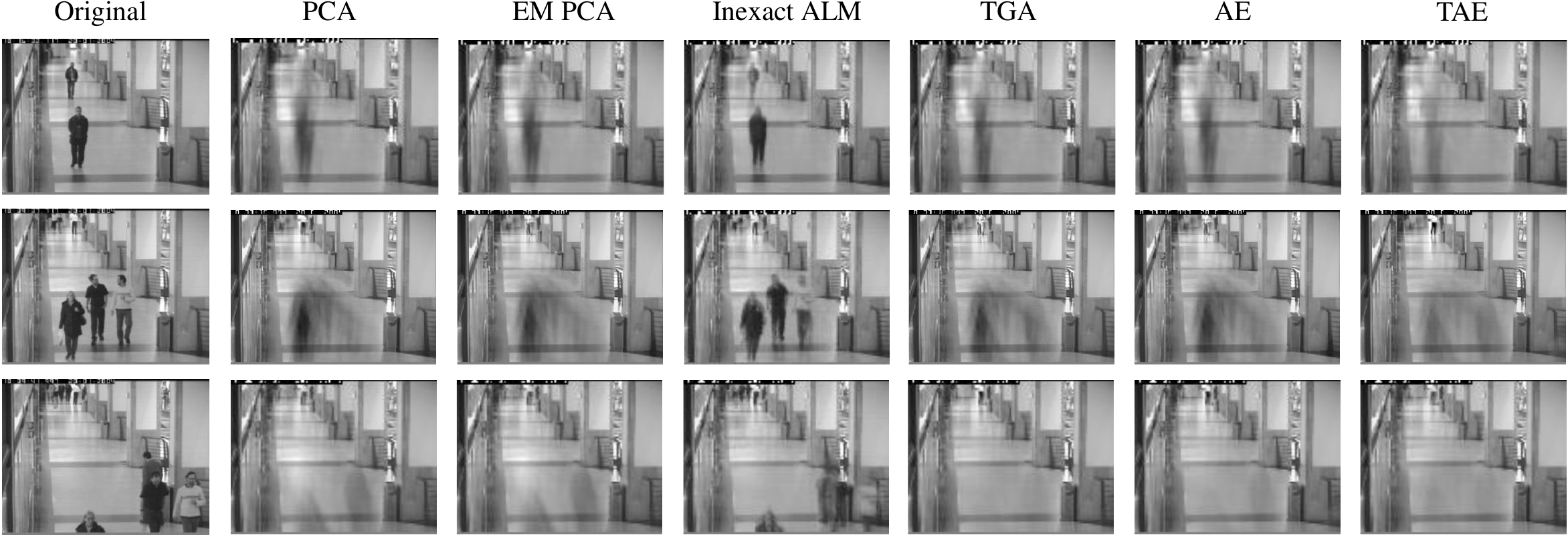}
    \caption{The reconstruction of background on \textit{CAVIAR} dataset. The frame length of video sequences are 1500, 1675 and 1675, respectively.}
    \label{CAVIAR-results}
\end{figure*}

\subsubsection{Computational complexity and memory requirements.} For a given zero-mean dataset ${\bf X} \in \mathbbm{R}^{D \times n}$, mapping the data to a $D-1$-sphere scales as $\mathcal{O}(D \times n)$ with normalizing. When $D < n$,computing the covariance matrix scales as $\mathcal{O}(n \times D^2)$ and decomposing a $D \times D$ matrix has computational complexity $\mathcal{O}(D^3)$. And when $D > n$, the decomposition can be transformed into matrix ${\bf U}^T {\bf U}$. Then the total computational complexity is $\mathcal{O}(\min(D^3, n^3))$ with $\mathcal{O}(\min(D^2, n^2))$ memory requirements. We use the full matrix decomposition when $D$ and $n$ is small enough, which scales as $\mathcal{O}(\min(D^3, n^3))$. In other cases, the randomized SVD is utilized with complexity $\mathcal{O}(kDnlog(n))$ when the top-$k$ PCs are needed.

For TAE, the trimming operation scales as $\mathcal{O}(n^2 \times D)$ and the remaining data matrix $\widetilde{\bf U}$ to be decomposed is of size $D \times (n - \tau_{\min})$, where $\tau_{\min}$ indicates the number of trimmed samples. When $\tau_{\min}$ is larger, the computation can be significantly reduced. 

\subsubsection{Large dataset in very high dimensions.} For large scale data, the memory usage can be optimized using randomized methods \cite{linderman2019fast} for PCA without the memory requirement for the entire data matrix at once. This improvement could produce nearly optimal accuracy by computing the decomposition efficiently. And the final computation speed of out-of-core algorithm relies on the disk access times.

\section{Experiments}

We develop AE to absorb both the advantages of PCA and RPCA based methods to provide a robust and flexible approach to fit different tasks. The goals of AE are summarized as

\begin{itemize}
    \item To simplify the description of high-dimensional data by extracting the most important properties with more appropriate subspaces.
    \item To obtain more robust \textit{principal components (PCs)} in data with (large scale) outliers than PCA, and to realize similar or better low-rank recovery performance like RPCA.
\end{itemize}

To verify the effectiveness of our proposed method, we conduct experiments on four different tasks, which cover the listed items above. For comparison, we use PCA, EM PCA \cite{roweis1998algorithms}, Inexact ALM \cite{candes2011robust}, and TGA \cite{hauberg2014grassmann}. All the experimental results are obtained by running on a personal computer with an Intel Core i7-8700 CPU and $2\times 8$ GB DDR4 2400MHz memory.

\subsection{Experimental settings}
The proposed AE is implemented with both Matlab and Python to provide comparisons in different experiments. 
For classification experiments, dimensionality reduction with AE and the compared methods is performed before the data is input into the classifier. For fear of the interference of classifier, all the classification tasks should be performed with XGBoost \cite{chen2016xgboost} and Random Vector Functional Link (RVFL) \cite{pao1994learning}, separately. The XGBoost classifier is provided by the Python Scikit-Learn wrapper interface\footnote{\url{https://xgboost.readthedocs.io/en/latest/python/python\_api.html\#module-xgboost.sklearn}}, and we utilize GridSearchCV\footnote{\url{https://scikit-learn.org/stable/modules/generated/sklearn.model_selection.GridSearchCV.html}} to conduct exhaustively search over specified parameter values to determine the important parameters. RVFL is implemented with Matlab, which is running by calling the MATLAB Engine API for Python\footnote{\url{https://www.mathworks.com/help/matlab/matlab-engine-for-python.html}}. In RVFL implementation, we use $N=50$ hidden neurons and \textit{Sigmoid} activation function.

For background modeling experiments, samples are trimmed for TAE with an angular threshold $\eta_\theta$ between $\pi / 3$ and $7 \pi / 18$, in radians. All the samples, which has a deviation angle greater than $\eta_\theta$, will be considered as outliers and trimmed. In the experiments of shadow removal, the angular threshold $\eta_\theta$ between $5 \pi /18$ and $\pi / 3$ are used.

\begin{figure*}[!t]
    \centering
    \includegraphics[width=16cm]{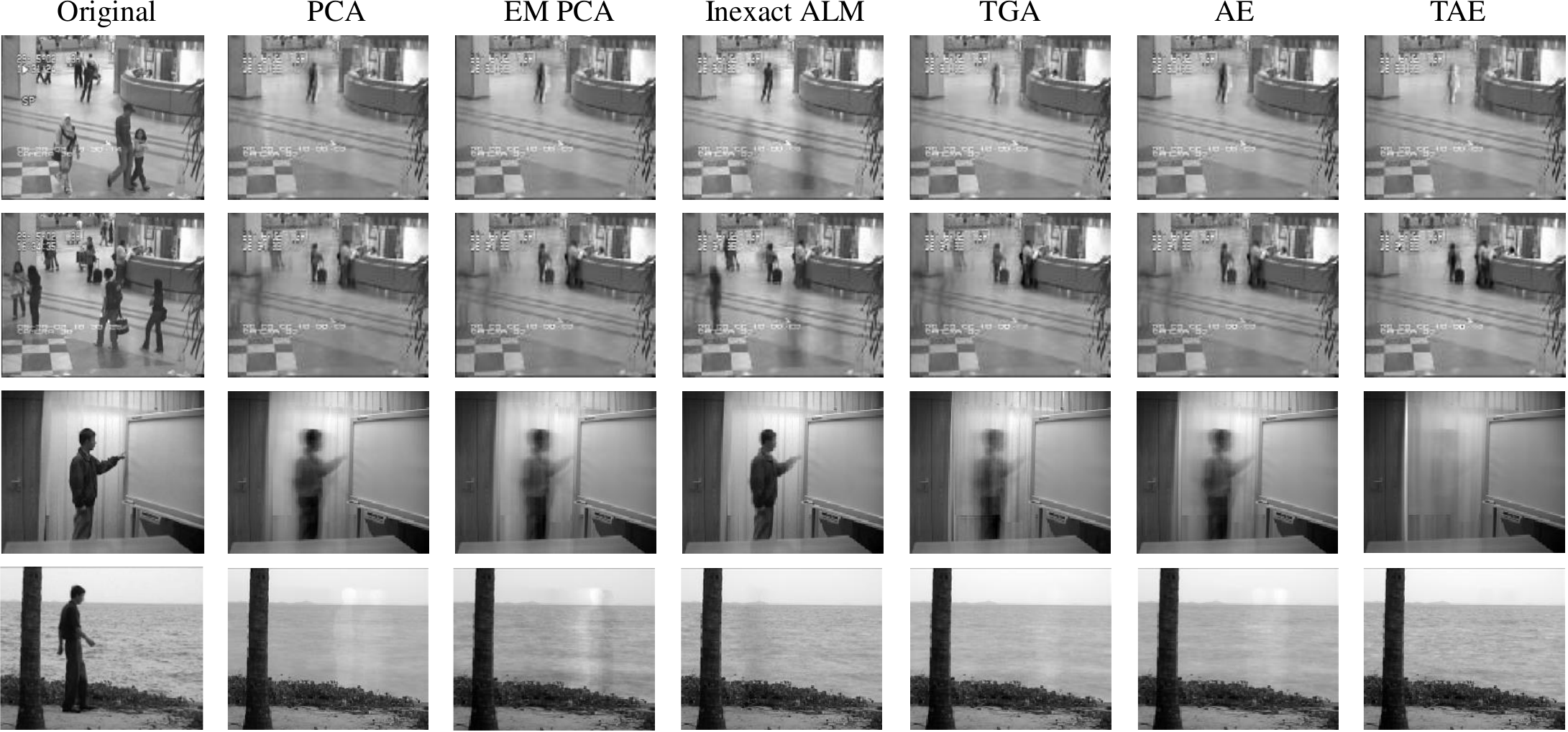}
    \caption{The reconstruction of background on \textit{I2R} dataset. The first two rows are scenes in \textit{Airport Hall} data. And the last two rows are corresponding to frames in \textit{Curtain} and \textit{Water Surface}, respectively.}
    \label{I2R-results}
\end{figure*}

\begin{table}[]
    \centering
    \begin{threeparttable}[b]
    \caption{Classification results ($\%$) on synthetic data.}
    \label{table-classification-synthetic-data}
    \begin{tabular}{cccccc}
        \toprule
        $d$ \tnote{a} &   Classifier  &   PCA     &   EM PCA  &   TGA     &   AE  \\
        \midrule
        \multirow{2}{*}{1}   &  RVFL   &   48.73  &   51.26  &   95.47  &   \textbf{96.03}  \\
        &  XGBC   &   51.34  &   53.41  &   \textbf{95.47}  &   \textbf{95.47}  \\
        \cmidrule{1-6}
        \multirow{2}{*}{2}   &  RVFL   &   99.76  &   99.76  &   99.76  &   \textbf{99.84}  \\
        &  XGBC   &   99.76  &   99.68  &   99.76  &   \textbf{99.84}  \\
        \bottomrule
    \end{tabular}
    \begin{tablenotes}
    \item[a] $d$ indicates the dimensionality of low-dimensional subspace.
    \end{tablenotes}
    \end{threeparttable}
\end{table}

For the comparative methods, the iteration threshold of EM PCA is set as 0.0001 as well as TGA, which is set a 50 percent trimming rate suggested by the author in \cite{hauberg2014grassmann}.

\subsection{Classification}

The classification experiments are conducted to prove the effectiveness of AE on subspace selection. AE and the compared methods are performed on both synthetic data and real data with outliers to determine PCs, and then the linear transformation with dimensionality reduction will be estimated through classification. In classification experiments, PCA, EM PCA and TGA are selected as competing methods.

\subsubsection{Synthetic data with vector-level outliers.} We generate synthetic data of 4 classes in 3-dimensional space and shift partial samples to add vector-level outliers intentionally. Dimensionality reduction with AE and the competing methods are conducted on synthetic data, the 2-dimensional results are shown in Fig. \ref{3d-data-dr}. The data in AE is normalized to reduce the error caused by vector-level translation so that the leading PC determined by AE is more reasonable. Thus, in the $1$-dimensional subspace shown in Table \ref{table-classification-synthetic-data}, AE exhibits noticeable advantage, in particular, compared with PCA and EM PCA. Accuracies of all the used methods are improved significantly, when the number of dimensionality orthogonal PCs increase from 1 to 2. However, AE maintains the highest classification accuracy in all compared methods.

\begin{table}[]
    \begin{threeparttable}[b]
    \centering 
    \caption{The mean classification accuracy ($\%$) on human action data.}
    \label{table-classification-real-data} 
    \begin{tabular}{cccccc}
        \toprule
        Data &   Classifier  &   PCA     &   EM PCA  &   TGA     &   AE  \\
        \midrule
        \multirow{2}{*}{\textit{FSD} \tnote{b}}   &  RVFL   &  84.66   &   84.89  &   84.35  &   \textbf{85.15}  \\
        &  XGBC   &   78.32  &   78.52  &   79.08  &   \textbf{79.41}  \\
        \cmidrule{1-6}
        \multirow{2}{*}{\textit{Kin} \tnote{c}}   &  RVFL   &   69.60  &   69.74  &   69.42  &   \textbf{70.02}  \\
        &  XGBC   &   61.19  &   61.68  &   61.84  &   \textbf{63.96}  \\
        \bottomrule
    \end{tabular}
    \begin{tablenotes}
        \item[b] \textit{FSD} indicates the \textit{FSD-10} dataset.
        \item[c] \textit{Kin} indicates the \textit{Kinematic} dataset.
    \end{tablenotes}
    \end{threeparttable}
\end{table}

\subsubsection{Human action data.} Outliers in human action data is hard to be addressed properly \cite{liu2020fsd, aviezer2012body}. We perform dimensionality reduction on human action data including \textit{FSD-10} \cite{liu2020fsd} and \textit{Kinematic}\footnote{\url{https://physionet.org/content/kinematic-actors-emotions/2.0.0/}} datasets, which covers temporal and spatial features simultaneously with complex outliers. For the needs of appropriate input size and efficiency improvements, feature representations are firstly encoded through a well-trained Spatial-Temporal Graph Convolutional Network (ST-GCN) \cite{yan2018spatial}. More specifically, the \textit{FSD-10} data has 510 features while the \textit{Kinematic} data is represented with 256 dimensions. Directional outliers, which can not be well addressed by a softmax classifier in ST-GCN, could lie on each possible dimensions. 

We compute the mean accuracy to avoid the error of randomly optimized classifier, when the input features are mapped to different dimensions. Actually, these extracted features with ST-GCN are mapped to $(0, 1)$, which significantly reduce the influence of outliers for $\ell_2$-norm based PCA. For AE, however, these outliers of lower value can be further suppressed under the squared cosine measurement. Thus, the performance of PCA based methods would be improved by AE. We compute the mean accuracy of classification on both \textit{FSD-10} and \textit{Kinematic} datasets from 10 to 100 dimensions (spaced by 10 dimensions), as shown in Table \ref{table-classification-real-data}. Compared with other comparative methods, AE always obtains the notable mean accuracy.

\begin{figure*}[!t]
    \centering
    \includegraphics[width=16cm]{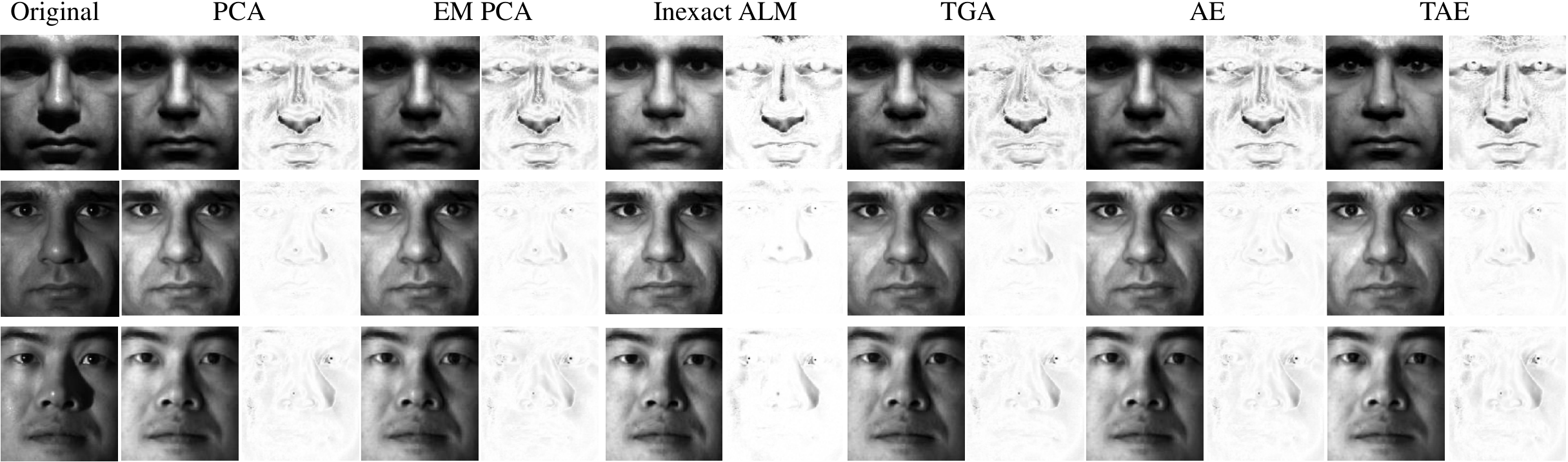}
    \caption{The results of shadow removal on \textit{Extended Yale Face Database B} with AE (TAE) and other compared methods. The faces are grouped by the original face, the reconstructed face and their inverted difference.}
    \label{YALE-results}
\end{figure*}

\subsection{Background Modeling}

RPCA based methods, which decompose a given data matrix into a low-rank matrix and a sparse one, extends the feasibility of PCA on background modeling tasks. The purpose of background modeling is to extract the static background from data such as video surveillance sequences. One most important character is that most frames containing pedestrians tend to be treated as outliers in the frame sequences and to become notable features. Moreover, PCA based methods are sensitive to outliers so that they usually lead to unsatisfactory recovery performance such as PCA and EM PCA. For the purpose of addressing data with large scale outliers, TAE is conducted in the background modeling experiments to identify and trim outliers preliminarily under cosine measurement. The experiments demonstrate that TAE provide a more robust approach on background modeling tasks based on AE. 

Firstly, we test the performance of background reconstruction on \textit{CAVIAR}\footnote{\url{http://homepages.inf.ed.ac.uk/rbf/CAVIAR/}} video sequences, which are also used in \cite{hauberg2014grassmann}. The selected 3 frames are shown in Fig. \ref{CAVIAR-results}. The backgrounds reconstructed by AE without trimming produce less ghosting than PCA and EM PCA, while the performance is also limited by massively scattered outliers. When the angular threshold of trimming decreases (enable trimming), TAE trims outliers appropriately and updates PCs so that the ghosting obtained by TAE reduces significantly. 

We also repeat the experiments on \textit{I2R} dataset \cite{li2004statistical} used in RPCA and TGA. For PCA, EM PCA, TGA and AE, we use 5 principal components to reconstruct the backgrounds. Fig. \ref{I2R-results} shows the extracted backgrounds from 6 frames. Compared with other methods, the backgrounds can be reconstructed by TAE with less ghosting. 

\subsection{Shadow Removal on Face Data}

Shadow removal is another commonly used estimation for RPCA based methods. We conduct experiments on the face data to test the effectiveness of AE/TAE on face modeling applications. The \textit{Extended Yale Face B Database} \cite{georghiades2001from}, which consists of 64 faces with size $192 \times 168$ under different illumination for each subject, can be utilized to remove the cast shadows in a low-dimensional subspace. These pixel-wise differences of cast shadows are outliers for PCA based methods. 

We reconstruct the tested faces with 9 principal components and show the inverted absolute difference between the original faces and the reconstructed ones in Fig. \ref{YALE-results}. Inexact ALM removes both the cast shadows and saturations so that the reconstructed faces appears more matte. Yet PCA based methods seems to preserve more illumination and remove the most cast shadows compared with the original face. Compared with AE, TAE with trimming reduce the influence of outliers and reconstruct more clear faces.

\subsection{Quantified Running Time}

\begin{table}[]
    \centering 
    \caption{The mean running time (seconds) with AE/TAE and the comparative methods.}
    \label{table-running-time} 
    \begin{tabular}{ccccccc}
        \toprule
        Data & PCA & EM & ALM & TGA & AE & TAE\\
        \midrule 
        \textit{Yale} & 0.3 & 13.8 & 43.1 & 19.1 & \textbf{0.1} & 0.2 \\
        \textit{Water} & 0.6 & 3.8 & 271.2 & 15.2 & 0.5 & \textbf{0.4} \\
        \textit{Hall} & 3.5 & 9.4 & 4611.4 & 89.8 & \textbf{3.3} & 3.6 \\
        \textit{CAVIAR} & 7.4 & 17.3 & 5520.6 & 55.1 & 7.2 & \textbf{5.9} \\
        \bottomrule
    \end{tabular}
\end{table}

We compute the mean running time in the background modeling and shadow removal experiments to test the scalability of AE/TAE, as shown in Table \ref{table-running-time}. The utilized datasets include \textit{Extended Yale Face Database B} (size of $32256 \times 64$), \textit{I2R Water Surface} (size of $20480 \times 633$), \textit{I2R Airport Hall} (size of $25344 \times 3584$), and \textit{CAVIAR} (size of $110592 \times 1675$). Benefiting from the randomized algorithm for matrix decomposition, PCA, AE and TAE are more efficient when less singular vectors are required. The computational complexity in TAE could be further reduced by the pre-trimming for outliers, when the sample size $n$ is high.

\section{Conclusion and Future Work}

We propose a new robust angular density based approach (AE) for PCA with cosine measurement. Based on the angular embedding framework, we prove that squared cosine measurement achieves angular attention and takes effect on outlier suppression. For large-scale or high-dimensional data, the computation is improved by weighing the computational complexity of the dimensionality and sample size. Furthermore, to address data with large scale outliers, the trimmed AE (TAE) approach are proposed in this paper to provide a pre-trimming mechanism. Through extensive experiments, the proposed AE/TAE outperforms the competing methods. 

In the future, the proposed AE/TAE can be conducted subsequently. Firstly, it is possible to reformulate an iterative approach under AE framework without matrix decomposition like EM PCA \cite{roweis1998algorithms}. Secondly, we hope the pre-trimming theory introduced in TAE can be extended to other algorithms or applications. 


\bibliography{bibfile}

\begin{thebibliography}{35}
\providecommand{\natexlab}[1]{#1}
\providecommand{\url}[1]{\texttt{#1}}
\providecommand{\urlprefix}{URL }
\expandafter\ifx\csname urlstyle\endcsname\relax
  \providecommand{\doi}[1]{doi:\discretionary{}{}{}#1}\else
  \providecommand{\doi}{doi:\discretionary{}{}{}\begingroup
  \urlstyle{rm}\Url}\fi

\bibitem[{Adiwijaya et~al.(2018)Adiwijaya, Lisnawati, Aditsania, and
  Kusumo}]{adiwijaya2018dimensionality}
Adiwijaya, W.~U.; Lisnawati, E.; Aditsania, A.; and Kusumo, D.~S. 2018.
\newblock Dimensionality reduction using principal component analysis for
  cancer detection based on microarray data classification.
\newblock \emph{Journal of Computer Science} 14(11): 1521--1530.

\bibitem[{Aviezer, Trope, and Todorov(2012)}]{aviezer2012body}
Aviezer, H.; Trope, Y.; and Todorov, A. 2012.
\newblock Body cues, not facial expressions, discriminate between intense
  positive and negative emotions.
\newblock \emph{Science} 338(6111): 1225--1229.

\bibitem[{Bouwmans et~al.(2018)Bouwmans, Javed, Zhang, Lin, and
  Otazo}]{bouwmans2018applications}
Bouwmans, T.; Javed, S.; Zhang, H.; Lin, Z.; and Otazo, R. 2018.
\newblock On the applications of robust PCA in image and video processing.
\newblock \emph{Proceedings of the IEEE} 106(8): 1427--1457.

\bibitem[{Cand{\`e}s et~al.(2011)Cand{\`e}s, Li, Ma, and
  Wright}]{candes2011robust}
Cand{\`e}s, E.~J.; Li, X.; Ma, Y.; and Wright, J. 2011.
\newblock Robust principal component analysis?
\newblock \emph{Journal of the ACM (JACM)} 58(3): 1--37.

\bibitem[{Chen and Guestrin(2016)}]{chen2016xgboost}
Chen, T.; and Guestrin, C. 2016.
\newblock Xgboost: A scalable tree boosting system.
\newblock In \emph{Proceedings of the 22nd acm sigkdd international conference
  on knowledge discovery and data mining}, 785--794.

\bibitem[{Ding et~al.(2006)Ding, Zhou, He, and Zha}]{ding2006r}
Ding, C.; Zhou, D.; He, X.; and Zha, H. 2006.
\newblock R 1-PCA: rotational invariant L 1-norm principal component analysis
  for robust subspace factorization.
\newblock In \emph{Proceedings of the 23rd international conference on Machine
  learning}, 281--288.

\bibitem[{Feng, Xu, and Yan(2012)}]{feng2012robust}
Feng, J.; Xu, H.; and Yan, S. 2012.
\newblock Robust PCA in high-dimension: A deterministic approach.
\newblock \emph{arXiv preprint arXiv:1206.4628} .

\bibitem[{Fernandez et~al.(2019)Fernandez, Gonzalez, Mozos, and
  Lopez}]{fernandez2019fpga}
Fernandez, D.; Gonzalez, C.; Mozos, D.; and Lopez, S. 2019.
\newblock FPGA implementation of the principal component analysis algorithm for
  dimensionality reduction of hyperspectral images.
\newblock \emph{Journal of Real-Time Image Processing} 16(5): 1395--1406.

\bibitem[{Georghiades, Belhumeur, and Kriegman(2001)}]{georghiades2001from}
Georghiades, A.~S.; Belhumeur, P.~N.; and Kriegman, D.~J. 2001.
\newblock From few to many: illumination cone models for face recognition under
  variable lighting and pose.
\newblock \emph{IEEE Transactions on Pattern Analysis and Machine Intelligence}
  23(6): 643--660.

\bibitem[{Graf, Smola, and Borer(2003)}]{graf2003classification}
Graf, A.~B.; Smola, A.~J.; and Borer, S. 2003.
\newblock Classification in a normalized feature space using support vector
  machines.
\newblock \emph{IEEE Transactions on Neural Networks} 14(3): 597--605.

\bibitem[{Hauberg, Feragen, and Black(2014)}]{hauberg2014grassmann}
Hauberg, S.; Feragen, A.; and Black, M.~J. 2014.
\newblock Grassmann averages for scalable robust PCA.
\newblock In \emph{Proceedings of the IEEE Conference on Computer Vision and
  Pattern Recognition}, 3810--3817.

\bibitem[{Ke and Kanade(2005)}]{ke2005robust}
Ke, Q.; and Kanade, T. 2005.
\newblock Robust l/sub 1/norm factorization in the presence of outliers and
  missing data by alternative convex programming.
\newblock In \emph{2005 IEEE Computer Society Conference on Computer Vision and
  Pattern Recognition (CVPR'05)}, volume~1, 739--746. IEEE.

\bibitem[{Lazcano et~al.(2017)Lazcano, Madro{\~n}al, Salvador, Desnos, Pelcat,
  Guerra, Fabelo, Ortega, L{\'o}pez, Callic{\'o} et~al.}]{lazcano2017porting}
Lazcano, R.; Madro{\~n}al, D.; Salvador, R.; Desnos, K.; Pelcat, M.; Guerra,
  R.; Fabelo, H.; Ortega, S.; L{\'o}pez, S.; Callic{\'o}, G.~M.; et~al. 2017.
\newblock Porting a PCA-based hyperspectral image dimensionality reduction
  algorithm for brain cancer detection on a manycore architecture.
\newblock \emph{Journal of Systems Architecture} 77: 101--111.

\bibitem[{Li et~al.(2004)Li, Huang, Gu, and Tian}]{li2004statistical}
Li, L.; Huang, W.; Gu, I. Y.-H.; and Tian, Q. 2004.
\newblock Statistical modeling of complex backgrounds for foreground object
  detection.
\newblock \emph{IEEE Transactions on Image Processing} 13(11): 1459--1472.

\bibitem[{Linderman et~al.(2019)Linderman, Rachh, Hoskins, Steinerberger, and
  Kluger}]{linderman2019fast}
Linderman, G.~C.; Rachh, M.; Hoskins, J.~G.; Steinerberger, S.; and Kluger, Y.
  2019.
\newblock Fast interpolation-based t-SNE for improved visualization of
  single-cell RNA-seq data.
\newblock \emph{Nature methods} 16(3): 243--245.

\bibitem[{Liu, Feng, and Qiao(2014)}]{liu2014scatter}
Liu, S.; Feng, L.; and Qiao, H. 2014.
\newblock Scatter balance: An angle-based supervised dimensionality reduction.
\newblock \emph{IEEE transactions on neural networks and learning systems}
  26(2): 277--289.

\bibitem[{Liu et~al.(2020)Liu, Liu, Huang, Qiao, Hu, Jiang, Zhang, Liu, and
  Guo}]{liu2020fsd}
Liu, S.; Liu, X.; Huang, G.; Qiao, H.; Hu, L.; Jiang, D.; Zhang, A.; Liu, Y.;
  and Guo, G. 2020.
\newblock FSD-10: A Fine-grained Classification Dataset for Figure Skating.
\newblock \emph{Neurocomputing} .

\bibitem[{Lu et~al.(2019)Lu, Feng, Chen, Liu, Lin, and Yan}]{lu2019tensor}
Lu, C.; Feng, J.; Chen, Y.; Liu, W.; Lin, Z.; and Yan, S. 2019.
\newblock Tensor robust principal component analysis with a new tensor nuclear
  norm.
\newblock \emph{IEEE transactions on pattern analysis and machine intelligence}
  42(4): 925--938.

\bibitem[{Martinsson, Rokhlin, and Tygert(2011)}]{martinsson2011randomized}
Martinsson, P.-G.; Rokhlin, V.; and Tygert, M. 2011.
\newblock A randomized algorithm for the decomposition of matrices.
\newblock \emph{Applied and Computational Harmonic Analysis} 30(1): 47--68.

\bibitem[{Nguyen and Bai(2010)}]{nguyen2010cosine}
Nguyen, H.~V.; and Bai, L. 2010.
\newblock Cosine similarity metric learning for face verification.
\newblock In \emph{Asian conference on computer vision}, 709--720. Springer.

\bibitem[{Pao, Park, and Sobajic(1994)}]{pao1994learning}
Pao, Y.-H.; Park, G.-H.; and Sobajic, D.~J. 1994.
\newblock Learning and generalization characteristics of the random vector
  functional-link net.
\newblock \emph{Neurocomputing} 6(2): 163--180.

\bibitem[{Roweis(1998)}]{roweis1998algorithms}
Roweis, S.~T. 1998.
\newblock EM algorithms for PCA and SPCA.
\newblock In \emph{Advances in neural information processing systems},
  626--632.

\bibitem[{Sargsyan, Wright, and Lim(2012)}]{sargsyan2012geopca}
Sargsyan, K.; Wright, J.; and Lim, C. 2012.
\newblock GeoPCA: a new tool for multivariate analysis of dihedral angles based
  on principal component geodesics.
\newblock \emph{Nucleic acids research} 40(3): e25--e25.

\bibitem[{Szlam, Kluger, and Tygert(2014)}]{szlam2014implementation}
Szlam, A.; Kluger, Y.; and Tygert, M. 2014.
\newblock An implementation of a randomized algorithm for principal component
  analysis.
\newblock \emph{arXiv preprint arXiv:1412.3510} .

\bibitem[{Vasan and Surendiran(2016)}]{vasan2016dimensionality}
Vasan, K.~K.; and Surendiran, B. 2016.
\newblock Dimensionality reduction using Principal Component Analysis for
  network intrusion detection.
\newblock \emph{Perspectives in Science} 8: 510--512.

\bibitem[{Wang et~al.(2015)Wang, Gao, Wang, and He}]{wang2015semantic}
Wang, D.; Gao, X.; Wang, X.; and He, L. 2015.
\newblock Semantic topic multimodal hashing for cross-media retrieval.
\newblock In \emph{Twenty-fourth international joint conference on artificial
  intelligence}.

\bibitem[{Wang et~al.(2018)Wang, Wang, Zhou, Ji, Gong, Zhou, Li, and
  Liu}]{wang2018cosface}
Wang, H.; Wang, Y.; Zhou, Z.; Ji, X.; Gong, D.; Zhou, J.; Li, Z.; and Liu, W.
  2018.
\newblock Cosface: Large margin cosine loss for deep face recognition.
\newblock In \emph{Proceedings of the IEEE Conference on Computer Vision and
  Pattern Recognition}, 5265--5274.

\bibitem[{Wang et~al.(2017)Wang, Gao, Gao, and Nie}]{wang2017angle}
Wang, Q.; Gao, Q.; Gao, X.; and Nie, F. 2017.
\newblock Angle Principal Component Analysis.
\newblock In \emph{IJCAI}, 2936--2942.

\bibitem[{Wilson et~al.(2014)Wilson, Hancock, Pekalska, and
  Duin}]{wilson2014spherical}
Wilson, R.~C.; Hancock, E.~R.; Pekalska, E.; and Duin, R.~P. 2014.
\newblock Spherical and hyperbolic embeddings of data.
\newblock \emph{IEEE transactions on pattern analysis and machine intelligence}
  36(11): 2255--2269.

\bibitem[{Wold, Esbensen, and Geladi(1987)}]{wold1987principal}
Wold, S.; Esbensen, K.; and Geladi, P. 1987.
\newblock Principal component analysis.
\newblock \emph{Chemometrics and intelligent laboratory systems} 2(1-3):
  37--52.

\bibitem[{Xia et~al.(2013)Xia, Chanussot, Du, and He}]{xia2013semi}
Xia, J.; Chanussot, J.; Du, P.; and He, X. 2013.
\newblock (Semi-) supervised probabilistic principal component analysis for
  hyperspectral remote sensing image classification.
\newblock \emph{IEEE Journal of selected topics in applied earth observations
  and remote sensing} 7(6): 2224--2236.

\bibitem[{Xue et~al.(2018)Xue, Deng, Panagakis, and
  Zafeiriou}]{xue2018informed}
Xue, N.; Deng, J.; Panagakis, Y.; and Zafeiriou, S. 2018.
\newblock Informed non-convex robust principal component analysis with
  features.
\newblock In \emph{Thirty-Second AAAI Conference on Artificial Intelligence}.

\bibitem[{Yan, Xiong, and Lin(2018)}]{yan2018spatial}
Yan, S.; Xiong, Y.; and Lin, D. 2018.
\newblock Spatial temporal graph convolutional networks for skeleton-based
  action recognition.
\newblock \emph{arXiv preprint arXiv:1801.07455} .

\bibitem[{Yi et~al.(2016)Yi, Park, Chen, and Caramanis}]{yi2016fast}
Yi, X.; Park, D.; Chen, Y.; and Caramanis, C. 2016.
\newblock Fast algorithms for robust PCA via gradient descent.
\newblock In \emph{Advances in neural information processing systems},
  4152--4160.

\bibitem[{Zhao et~al.(2014)Zhao, Meng, Xu, Zuo, and Zhang}]{zhao2014robust}
Zhao, Q.; Meng, D.; Xu, Z.; Zuo, W.; and Zhang, L. 2014.
\newblock Robust Principal Component Analysis with Complex Noise 55--63.

\end{thebibliography}
\end{document}